\documentclass{article}

\PassOptionsToPackage{numbers, sort&compress}{natbib}

 \usepackage[preprint]{neurips_2026}

\usepackage[utf8]{inputenc} 
\usepackage[T1]{fontenc}    
\usepackage{hyperref}       
\usepackage{url}            
\usepackage{booktabs}       
\usepackage{amsfonts}       
\usepackage{nicefrac}       
\usepackage{microtype}      
\usepackage[dvipsnames]{xcolor}
\usepackage{xfrac}          
\usepackage{mathtools}     
\usepackage{amsmath}\allowdisplaybreaks
\usepackage{bm}
\usepackage{amssymb}
\numberwithin{equation}{section}
\usepackage[colorinlistoftodos]{todonotes}
\usepackage{comment}
\usepackage{wrapfig}
\usepackage{tcolorbox}
\usepackage{enumitem}
\usepackage{pifont}

\usepackage{amsthm}
\usepackage{float}
\usepackage{thmtools, thm-restate}
\usepackage{hyperref}
\usepackage[nameinlink,noabbrev]{cleveref}

\newtheorem{definition}{Definition}[section]

\crefname{theorem}{theorem}{theorems}
\Crefname{theorem}{Theorem}{Theorems}
\crefname{proposition}{proposition}{propositions}
\Crefname{proposition}{Proposition}{Propositions}
\crefname{definition}{Definition}{Definitions}
\Crefname{definition}{Definition}{Definitions}
\crefname{assumption}{Assumption}{Assumptions}
\Crefname{assumption}{Assumption}{Assumptions}

\DeclarePairedDelimiterX{\ip}[2]{\langle}{\rangle}{#1,\,#2}

\theoremstyle{remark}

\usepackage{pifont}

\hypersetup{
	colorlinks=true,
	linkcolor=myred,
	citecolor=myred,
	urlcolor=mygray,
}

\definecolor{myblue}{RGB}{0,0,150}
\definecolor{myred}{RGB}{120,50,50}
\definecolor{mygray}{RGB}{90,90,110}

\title{Beyond Uniform Local Isometry and Topology: FactoMap for Disentangled Representations}

\workshoptitle{Symmetry and Geometry in Neural Representations}

\author{%
  Sohini Gupta\\
  University of Alberta\\
  Alberta Machine Intelligence Institute (Amii)\\
  \And
  Bahareh Tolooshams\\
  University of Alberta\\
  Alberta Machine Intelligence Institute (Amii)\\
  Canada CIFAR AI Chair\\
}

\newcommand{\R}{\mathbb{R}} 

\newcommand{\I}{{\bm I}}

\newcommand{\G}{{\bm G}}
\newcommand{\J}{{\bm J}}

\newcommand{\w}{{\bm w}}
\newcommand{\z}{{\bm z}}
\newcommand{\x}{{\bm x}}
\newcommand{\cvec}{{\bm c}}

\DeclareMathOperator*{\argmin}{arg\,min}

\begin{document}
\maketitle
%
%
\begin{abstract}
Many disentanglement methods represent generative factors using Euclidean product coordinates, although the underlying factor spaces may wrap, collapse, or have position-dependent geometry. We introduce \emph{factor-space structure}, combining factor domains, generator-induced identifications, and position-dependent scales to distinguish topologically equivalent spaces with different factor geometries. We show that statistically independent factors need not be geometrically separable: hue and scale produce effects that grow at different rates, yielding anisotropy that no fixed rescaling removes. We propose the \emph{Factor-Space Topographic Map} (FactoMap), which learns interpretable prototypes indexed by a factor-space lattice. Topographic learning transfers the lattice's periodicity, collapses, and non-uniform extent to the representation. Experiments show that matching this structure preserves factor continuity and enables disentanglement of the underlying factors.
\end{abstract}
%

%
\section{Introduction}\label{sec:intro}
Disentangled representation learning seeks to recover the factors that generate
a dataset, so that changing one factor of the world changes one coordinate of the representation while leaving the others fixed~\cite{bengio2013representation,higgins2018disentangle,desjardins2012disentangling}. Such representations are appealing because their coordinates can support interpretation, generalization, fairness, and control~\cite{chen2016infogan,ma2019learning,pmlr-v119-locatello20a,locatello2019fairness,roth2023disentanglement,hsu2023disentanglement}. However, the factors underlying an arbitrary generative process cannot be recovered from unlabelled observations without additional assumptions~\cite{locatello2019challenging}. 

Existing approaches impose inductive biases through statistics or geometry. Statistical methods encourage factorized representations~\cite{higgins2017betavae,pmlr-v80-kim18b,chen2018isolating,kumar2018variational}, while geometric ones constrain how distances vary between factors and observations~\cite{gropp2020isometricautoencoders,huh2023isometric,song2023flow}. A prominent geometric assumption is local isometry: equal-sized changes in different factor directions should produce equal-sized changes in the observation, up to a fixed global choice of units. This enables identifiability under suitable conditions~\cite{horan2021when,nelson2026statistical}, and has motivated methods that favour flat or distance-preserving representations~\cite{lee2022regularized}.

Local isometry, however, is not generally a property of the data-generating process. Different factors act through different mechanisms, and their effects can depend on the current factor configuration. For an object with independently varying hue and scale, changing hue affects its area, whereas changing scale primarily affects its boundary. Because area and perimeter grow at different rates, their relative effect vary as the object grows: the mismatch is position-dependent and cannot be removed by a fixed rescaling of the units. Statistical independence therefore does not imply geometric separability. Factors may be sampled independently while remaining geometrically coupled through the generator.

Local geometry cannot determine whether a factor terminates or closes: an interval and a circle are locally indistinguishable, although a Euclidean coordinate must cut a periodic factor. Non-Euclidean topologies address this closure problem~\cite{davidson2018hyperspherical,moor2020topological,pmlr-v162-tonnaer22a,van2026constructing}, but topology~\cite{lee2000introduction} alone remains insufficient. A square, a planar disk, and a finite conical surface are homeomorphic~\cite{lee2000introduction}, yet support different factor organizations. A circular factor may retain constant extent, shrink, or collapse at one point. These spaces therefore share a topology while differing in their factor identifications and local scale-structure; a representation encoding only dimension and topology cannot express these distinctions.

\begin{figure}
    \centering 
    \includegraphics[width=0.98\linewidth]{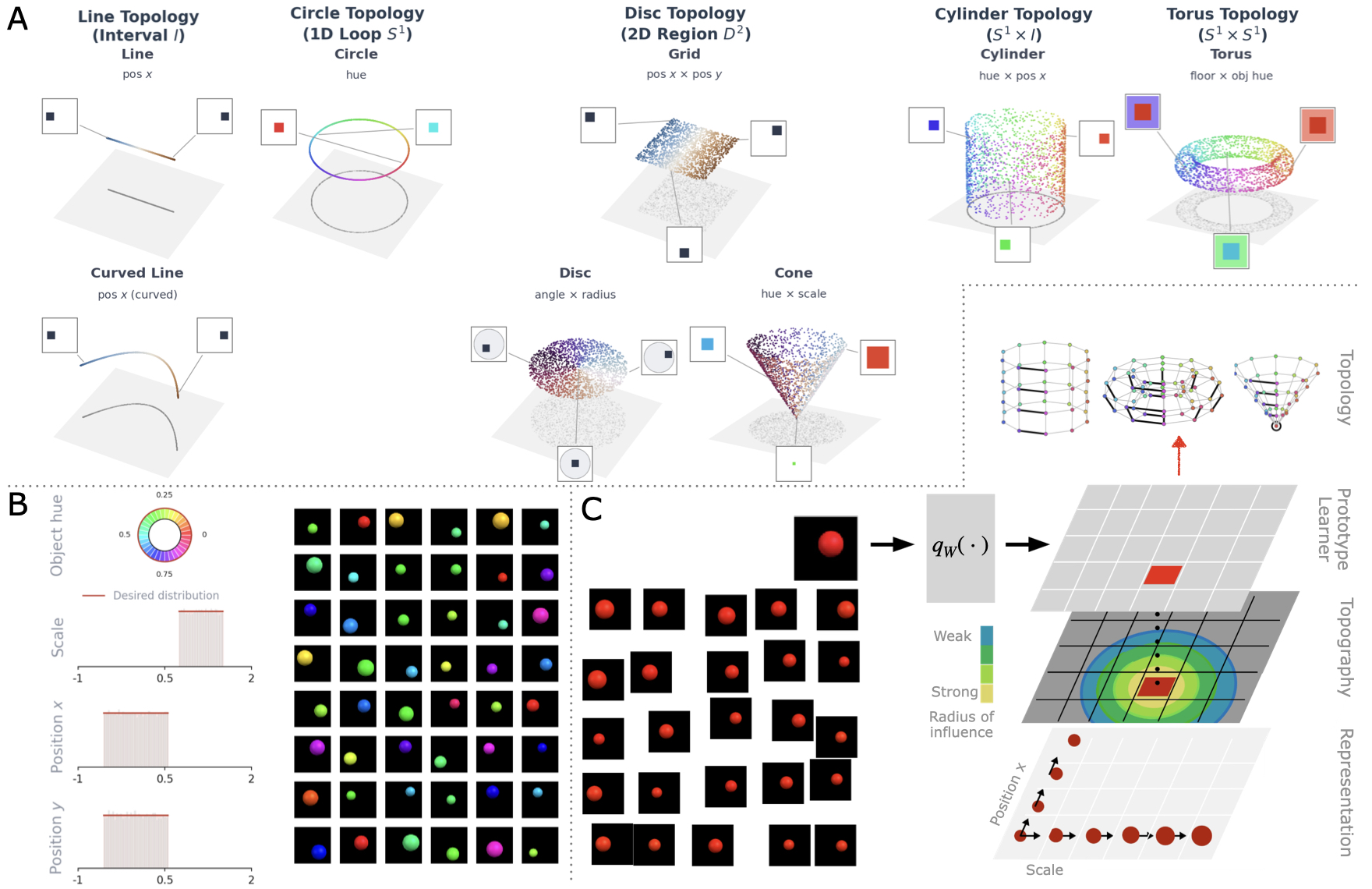}
    \vspace{-0.7mm}
    \caption{\textbf{FactoMaps: Factor-Space + Topology + Topography.} \textbf{A.} Representative factor spaces and lattices; the disk and cone share topology but differ in factor-space geometry. \textbf{B.} Factor distributions and samples. \textbf{C.} FactoMap learns topographic prototypes on distinct factor-space structures.}
    \vspace{-5.2mm}
    \label{fig:factomap}
\end{figure}
We introduce \emph{factor-space structure} and the \emph{Factor-Space Topographic Map} (FactoMap,~\Cref{fig:factomap}), a structured prototype learner designed to disentangle data by aligning lattice coordinates with the underlying factors, even when they do not admit uniform Euclidean geometry. The formalism combines topology with generator-induced identifications and position-dependent scales; FactoMap realizes this structure as a lattice of interpretable prototypes, learned through topographic cooperation~\cite{kohonen1990self,welling2002learning}.
\begin{itemize}[leftmargin=4mm]
\setlength\itemsep{0.0em}
    \item \textit{Factor-space formalism.} We separate factor domains, generator-induced identifications, and factor-wise scales, distinguishing homeomorphic spaces with different organizations or geometries.
    \item \textit{Failure of uniform local isometry.} We show that statistically independent hue and scale are geometrically coupled, producing position-dependent anisotropy that fixed rescaling cannot remove.
    \item \textit{Factor-space prototype disentanglement.} We propose FactoMap, whose supplied lattice distance encodes periodic closure, collapsed fibres, and non-uniform factor extent. Topographic learning organizes the prototypes to vary smoothly along factor-space paths, making the lattice coordinates a structured representation aligned with the underlying factors.
    \item \textit{Controlled evaluation.} In one- and two-factor settings, we compare matched and mismatched prototype domains to isolate how factor-space structure affects disentanglement.
\end{itemize}

%
\vspace{-2mm}
\section{Factor-Space Topographic Map (FactoMap)}\label{sec:methods}
\textbf{Factor-space structure.}\quad Topology specifies neither generator-induced identifications nor position-dependent factor scales. We therefore propose the following factor-space structure.
\begin{definition}[Factor-Space Structure]\label{def:facspacestruc}
Let $\mathcal Z=\prod_{i=1}^{p}Z_i$, with $Z_i\in\{I,S^1\}$ and $I=[0,1]$, and let $g:\mathcal Z\to\R^m$ be piecewise continuously differentiable. Let $\bm e_i$ denote the $i$-th coordinate direction, with increments along a circular factor taken modulo its period. The generator induces $\z\approx_g\z'
    \iff g(\z)=g(\z'),
    \
    c_i(\z)
    :=\|\frac{\partial g}{\partial z_i}(\z)\|_2,
    \
    \cvec_g:=(c_1,\ldots,c_p)$. We call $\mathfrak F_g:=(\mathcal Z,\approx_g,\cvec_g)$ the
\emph{factor-space structure} induced by $g$, and $\mathcal F_g:=\mathcal Z/\!\approx_g$, the realized factor space, equipped with the quotient topology. $c_i(\z)$ measures the local data-space effect of factor $i$, i.e., for admissible $\delta\to0$, we have
\begin{equation}
\left\|g(\z+\delta\bm e_i)-g(\z)\right\|_2 =c_i(\z)|\delta|+o(|\delta|).
\end{equation}
\end{definition}
The pair $(\mathcal Z,\approx_g)$ determines the topology of $\mathcal F_g$, while $\cvec_g$ records its factor-wise local scales on $\mathcal Z$ (see~\Cref{app:factor-space-details}). FactoMap models diagonal factor-wise scales; the HSV renderer additionally induces unmodelled cross-terms, analyzed in~\Cref{app:exp}.

%
\textbf{Beyond uniform Euclidean factor geometry.}\quad Some geometric approaches to disentanglement impose local isometry to a Euclidean factor domain~\cite{horan2021when,gropp2020isometricautoencoders, lee2022regularized,huh2023isometric,nelson2026statistical}. Let $\J_g(\z)=[\partial g/\partial z_1,\ldots,\partial g/\partial z_p]$ and $\G(\z)=\J_g(\z)^\top\J_g(\z)$. Up to a fixed global scale, local isometry to uniform Euclidean factor coordinates requires $\G(\z)=C^2\I$. This fails when factor directions have nonzero cross-terms or when their diagonal scales are unequal or position-dependent. FactoMap addresses the latter by allowing its lattice geometry to vary with the scale functions in \Cref{def:facspacestruc}.

Consider hue $h\in S^1$ and scale $s\in[s_{\min},s_{\max}]$, $s_{\min}>0$, on a black background: $g(h,s)_{xy} = m_{xy}(s)\,\bm\rho(h)$. The exponent $\gamma$ depends on the rendering model; a fixed-width transition gives $\gamma=\nicefrac{1}{2}$.
\begin{restatable}[Scale-dependent anisotropy]{proposition}{aniso}
\label{prop:aniso}
Suppose that $m_s$ and $\bm\rho$ are continuously differentiable and, for all
$s\in[s_{\min},s_{\max}]$ and $h\in S^1$, $\|m_s\|_2=a\,s,\ \|\partial_s m_s\|_2=b\,s^\gamma,\ \|\bm\rho(h)\|_2=q,\ \|\bm\rho'(h)\|_2=v$, where $a,b,q,v>0$. The constant-radius condition implies
$\langle\bm\rho(h),\bm\rho'(h)\rangle
=\tfrac12\partial_h\|\bm\rho(h)\|_2^2=0$. Hence, the hue and scale directions are orthogonal and \begin{equation}
    c_s(h,s)=bq\,s^\gamma,
    \qquad
    c_h(h,s)=av\,s,
    \qquad
    \kappa(s):=\tfrac{c_h(h,s)}{c_s(h,s)}
    =\tfrac{av}{bq}\,s^{1-\gamma}.
\end{equation}
Hence, $\G(h,s)=\operatorname{diag}(a^2v^2\,s^2,b^2q^2\,s^{2\gamma})$, and $\nicefrac{\kappa(s_{\max})}{\kappa(s_{\min})} =(\nicefrac{s_{\max}}{s_{\min}})^{1-\gamma}$. No fixed diagonal re-weighting makes $\G=C^2\I$ throughout a non-degenerate scale interval.
\end{restatable}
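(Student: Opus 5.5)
The computation rests on the separable form $g(h,s)=m_s\otimes\bm\rho(h)$: each pixel $xy$ carries the vector $m_{xy}(s)\bm\rho(h)$, so the image is the outer product of the spatial mask $m_s$ with the colour vector $\bm\rho(h)$. Under the Frobenius norm, $\|u\otimes w\|_2=\|u\|_2\|w\|_2$ and $\langle u\otimes w,u'\otimes w'\rangle=\langle u,u'\rangle\langle w,w'\rangle$. These two identities do almost all the work.

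First I will differentiate. Since $m_s$ and $\bm\rho$ are $C^1$, the product rule gives
\begin{equation*}
\partial_h g=m_s\otimes\bm\rho'(h),\qquad \partial_s g=(\partial_s m_s)\otimes\bm\rho(h).
\end{equation*}
Taking norms and using the hypotheses yields $c_h=\|m_s\|_2\|\bm\rho'(h)\|_2=as\,v$ and $c_s=\|\partial_s m_s\|_2\|\bm\rho(h)\|_2=bs^\gamma q$. The expression for $\kappa(s)$ then follows by division.

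Next, for the cross-term I will write $\langle\partial_h g,\partial_s g\rangle=\langle m_s,\partial_s m_s\rangle\,\langle\bm\rho'(h),\bm\rho(h)\rangle$. Because $\|\bm\rho(h)\|_2^2=q^2$ is constant in $h$, differentiating gives $2\langle\bm\rho,\bm\rho'\rangle=0$. The cross-term therefore vanishes no matter what $\langle m_s,\partial_s m_s\rangle$ is, and $\G=\J_g^\top\J_g$ is diagonal with the stated entries $(a^2v^2s^2,\,b^2q^2s^{2\gamma})$. The ratio $\kappa(s_{\max})/\kappa(s_{\min})=(s_{\max}/s_{\min})^{1-\gamma}$ is then immediate.

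For the impossibility claim, the step needing the most care, I will define a fixed diagonal reweighting as a constant reparametrization $h=\alpha\tilde h$, $s=\beta\tilde s$ (possibly composed with a constant global factor). Under this change the metric becomes $\operatorname{diag}(\alpha^2a^2v^2s^2,\ \beta^2b^2q^2s^{2\gamma})$. Suppose this equals $C^2\I$ for all $s$ in an interval $[s_1,s_2]$ with $s_1<s_2$. Dividing the two diagonal entries gives $(\alpha av/\beta bq)^2 s^{2(1-\gamma)}=1$ on that interval. If $\gamma\neq1$, the function $s^{2(1-\gamma)}$ is strictly monotone on $s>0$, so it cannot equal a constant on a non-degenerate interval, which is a contradiction. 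I will assume $\gamma\ne1$, as the statement implicitly does (for example $\gamma=\tfrac12$). In the case $\gamma=1$ the ratio $\kappa$ is constant and the statement would be false, so this exclusion has to be stated explicitly.
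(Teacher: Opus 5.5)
Your computation of $\partial_h g$, $\partial_s g$, the scales $c_h=avs$ and $c_s=bqs^\gamma$, the vanishing cross-term, $\G$, and the ratio $\kappa(s_{\max})/\kappa(s_{\min})$ is correct and matches the paper's proof. The gap is in the impossibility step. When you divide the two diagonal entries, you only test whether the reweighted metric can be made \emph{proportional} to $\I$. That discards the information needed to rule out $\G=C^2\I$, where $C$ is a fixed constant (the main text calls it a ``fixed global scale''). This is why your argument stalls at $\gamma=1$.

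Your conclusion that the statement would then be false is wrong. For $\gamma=1$ you can choose $|\alpha|av=|\beta|bq$ and obtain $\alpha^2a^2v^2s^2\,\I$. But this is a position-dependent multiple of $\I$, not $C^2\I$. It is conformal, as the paper remarks in its reparameterization paragraph. So no exclusion of $\gamma=1$ needs to be added to the statement.

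The missing step is to look at a single diagonal entry rather than at the ratio. If $\G=C^2\I$ on $[s_1,s_2]$, then the hue--hue entry $\alpha^2a^2v^2s^2$ must equal the constant $C^2$ there. This is impossible, because $\alpha^2a^2v^2>0$ and $s\mapsto s^2$ is strictly increasing on $s>0$. Composing with a constant global factor only multiplies this entry by a constant, so it does not help. This is exactly the paper's argument (``the hue-hue entry remains proportional to $s^2$''), and it holds for every $\gamma$. Your ratio argument remains valid for $\gamma\neq1$, where it shows that no constant diagonal reweighting even equalizes the two directions. But that is not needed for the proposition, and on its own it does not prove the proposition in full.
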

Hence, statistically independent factors need not be geometrically separable: the observation-space extent of hue varies with scale. A uniform cylinder cannot represent this dependence, whereas a matched lattice varies the extent of its cyclic axis along the scale axis (see~\Cref{app:hue-scale}).

\textbf{Factor-space topographic map.}\quad We propose FactoMap, a structured prototype learner to realize a factor-space structure as an interpretable topographic representation. It has three components: a factor-space lattice, a data-space prototype field, and a topographic learning objective.

\textit{Factor-space lattice.} For a supplied structure $\mathfrak F$, let $(\Lambda_{\mathfrak F},d_{\mathfrak F})$ be a finite lattice with an analytic distance chosen to realize that structure (see~\Cref{app:lattice-distances}). Interval and circular factors determine whether axes terminate or wrap; identifications determine which fibres meet or collapse; and the scale functions inform their relative local extent. This finite realization need not be unique.

\textit{Prototype representation.} Each site $k\in\Lambda_{\mathfrak F}$ indexes a learnable prototype $\w_k\in\R^m$, defining $W:\Lambda_{\mathfrak F}\to\R^m$, $k\mapsto\w_k$. FactoMap encodes an observation by its best-matching lattice coordinate,
\begin{equation}
q_W(\x) :=\argmin_{k\in\Lambda_{\mathfrak F}} \|\x-\w_k\|_2^2.
\end{equation}
Thus, $q_W(\x)$ is a factor-space-indexed discrete coordinate.

\emph{Topographic learning.} For neighbourhood width $\sigma_t$, define $H_{\sigma_t}(j,k)=\exp[-d_{\mathfrak F}(j,k)^2/(2\sigma_t^2)]$. We learn the prototypes with the self-organizing-map objective~\cite{kohonen1990self}
\begin{equation}
    \mathcal L_t(W) =\mathbb E_{\x}\!\Big[ \textstyle\sum_{k\in\Lambda_{\mathfrak F}} H_{\sigma_t}\!\left(q_W(\x),k\right) \|\x-\w_k\|_2^2\Big].
    \label{eq:som-loss}
\end{equation}
The neighbourhood term encourages nearby factor-space sites to represent nearby observations (see~\Cref{app:opt}). $(\Lambda_{\mathfrak F},d_{\mathfrak F},W,q_W)$ defines the learned FactoMap: the lattice encodes the supplied topology and geometry, while topographic learning realizes them in observation-space prototypes.

%
\vspace{-2mm}
\section{Results}\label{sec:results}
\vspace{-2mm}
We introduce \emph{FactoShapes}, a synthetic dataset with four controlled generative factors: object hue, scale, and horizontal and vertical position, and evaluate FactoMap on it (see~\Cref{app:exp}).

\textbf{Topology matching removes the cut in a periodic factor.}\quad For the interval-valued scale factor, a line-lattice FactoMap learns an ordered sequence of prototypes whose object size changes smoothly along the lattice (\Cref{fig:results}A). Finite differences on the rendered images show that the local scale speed increases approximately as $c_s(s)\propto\sqrt{s}$, consistent with the analytical fixed-width rendering model. Applying the same open lattice to hue introduces a cut: hue values near $0$ and $1$ are adjacent in factor space, but their best-matching sites lie at opposite lattice ends. A ring lattice reduces this separation and produces a continuous cycle of hue prototypes. Thus, local continuity does not determine global closure; periodicity must be encoded in the factor-space lattice. The prototype-distance profiles show how the structures are realized. For scale, the nonlinear cumulative distance is consistent with a position-dependent $c_s(s)$. Along the hue ring, distance from a reference prototype increases toward the antipode and then decreases, as expected for a cyclic factor.

\begin{figure}
    \centering 
    \includegraphics[width=1.00\linewidth]{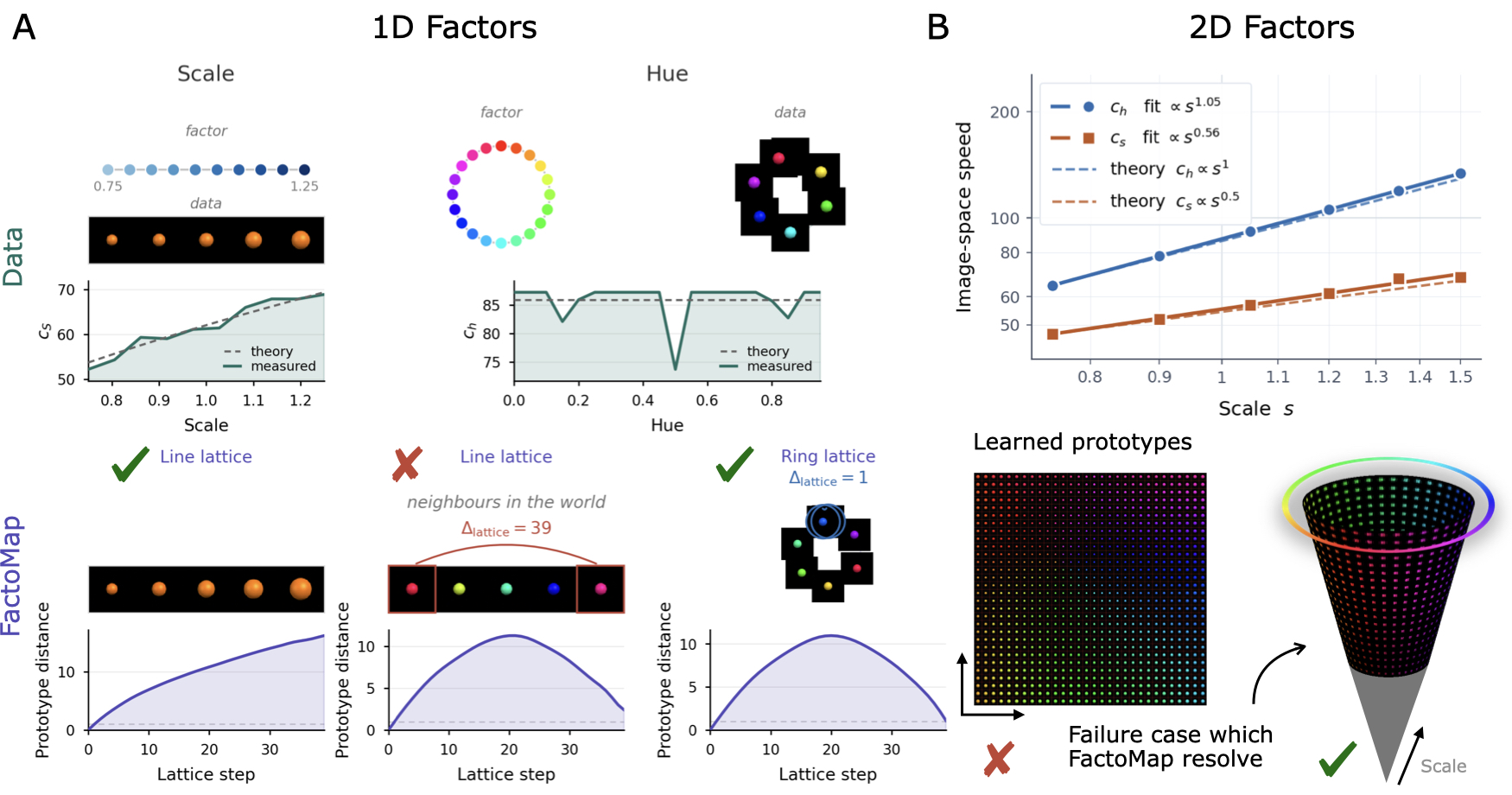}
    \vspace{-5.8mm}
    \caption{\textbf{FactoMap disentangles factors when topology and local scale match their factor-space structure.} \textbf{A.} A line lattice represents scale continuously. For periodic hue, an open lattice separates adjacent endpoint hues ($\Delta_{\mathrm{lattice}}=39$), whereas a ring restores their adjacency ($\Delta_{\mathrm{lattice}}=1$). \textbf{B.} With hue and scale varied jointly, the measured image-space speeds agree with the analytical laws $c_h\propto s$ and $c_s\propto\sqrt{s}$. A rectangular lattice mixes the two factors, whereas the matched cone-structured lattice wraps hue and varies its extent with scale, yielding factor-aligned prototypes.}
    \vspace{-5.91mm}
    \label{fig:results}
\end{figure}
\textbf{Independent factors exhibit position-dependent anisotropy.}\quad  When hue and scale vary jointly, their measured image-space speeds follow different power laws (\Cref{fig:results}B, top). Power-law fits give $c_h^{\mathrm{meas}}(s)\propto s^{1.05}$ and $c_s^{\mathrm{meas}}(s)\propto s^{0.56}$, close to the analytical predictions $c_h^{\mathrm{th}}(s)\propto s$ and $c_s^{\mathrm{th}}(s)\propto s^{0.5}$ from \Cref{prop:aniso}. Consequently, $\kappa(s)=\nicefrac{c_h(s)}{c_s(s)} \propto s^{1.05-0.56} =s^{0.49}$. Over $s\in[0.75,1.5]$, the fitted anisotropy therefore changes by $\nicefrac{\kappa(s_{\max})}{\kappa(s_{\min})} =2^{0.49}\approx1.404$, in close agreement with the analytical prediction $\sqrt{2}\approx1.41$. Thus, no fixed rescaling matches both directions across scales: independently sampled hue and scale remain geometrically coupled by the generator.

\textbf{The matched factor-space inductive bias enables hue-scale disentanglement.}\quad As predicted by~\Cref{prop:aniso}, the two-dimensional lattice must represent unequal, position-dependent factor scales: its cyclic circumference must vary with $c_h(s)\propto s$, while displacement along its scale direction must reflect $c_s(s)\propto\sqrt{s}$. The measured power laws above closely match these analytical rates. A rectangular lattice neither wraps hue nor varies its cyclic extent; empirically, its prototypes mix hue and scale rather than aligning the two factors with separate coordinates (\Cref{fig:results}B, bottom).

\begin{wraptable}[7]{r}{0.37\textwidth}
  \centering
  \setlength{\tabcolsep}{2pt}
  \vspace{-2.5mm} 
  \vspace{-12pt} 
  \caption{Disentanglement with matched cone and mismatched grid.}
  \begin{tabular}{lccc}
    \toprule
    \textbf{Lattice} & \textbf{InfoM} $\uparrow$ & \textbf{InfoE} $\uparrow$ & \textbf{InfoC} $\uparrow$ \\
    \midrule
    \ding{52} Cone & \bf 0.953 & \bf 0.822 & \bf 0.950 \\
    \ding{56} Grid  & 0.272 & 0.544 & 0.025 \\
    \bottomrule
  \end{tabular}
  \label{tab:disentanglement}
\end{wraptable}
FactoMap uses a supplied cone lattice over the sampled interval: hue wraps around the cyclic coordinate, and the circumference of its fibres varies with scale (note: because $s_{\min}>0$, the experimental lattice contains no collapsed fibre; hence, practically the prototypes are on a conical frustum. Collapse at $s=0$ remains a formal extension illustrated in~\Cref{fig:factomap}). The analytic distance incorporates the factor-wise scales, using $c_h(s)$ for the cyclic extent and $c_s(s)$ for displacement along scale. The learned prototypes consequently organize hue around the cyclic coordinate and scale along the non-periodic coordinate. In this experiment, the mismatched rectangular lattice fails to disentangle hue and scale, whereas supplying the matched cone structure yields a factor-aligned prototype representation. This qualitative organization is confirmed quantitatively in~\Cref{tab:disentanglement} using InfoMEC~\cite{hsu2023disentanglement}; the matched cone lattice substantially improves disentanglement over the mismatched grid.

\begin{ack}
S.G. and B.T. would like to thank Valérie Costa for helpful discussions. S.G. and B.T. acknowledge support of Natural Sciences and Engineering Research Council of Canada (NSERC), RGPIN-2026-05959, and funding from the Canada CIFAR AI Chairs Program. 
\end{ack}

%
\bibliographystyle{unsrt}
\bibliography{references}

%
\newpage
\appendix

%
\section{Factor-space and FactoMap Details}
\label{app:factomap}

\textbf{Notation.}\quad We denote scalars as non-bold-lower-case a, vectors as bold-lower-case ${\bm a}$, and matrices as upper-case letters ${\bm A}$. $\|\cdot\|_2$ denotes the $\ell_2$ (Euclidean) norm. The generator $g:\mathcal Z\to\R^m$ maps factor configurations $\z\in \mathcal Z$ to observations $\x$. We write $\mathfrak F_g=(\mathcal Z,\approx_g,\cvec_g)$ for the factor-space structure induced by $g$ and $\mathcal F_g=\mathcal Z/\!\approx_g$ for its realized factor space. For a supplied structure $\mathfrak F$, $\Lambda_{\mathfrak F}$ denotes its finite lattice realization and $d_{\mathfrak F}$ its analytic lattice distance. Each site $k\in\Lambda_{\mathfrak F}$ indexes a prototype $\w_k\in\R^m$, and $W:\Lambda_{\mathfrak F}\to\R^m$ denotes the prototype field. The encoder $q_W(\x)$ returns the best-matching lattice site, $H_{\sigma_t}(j,k)$ is the topographic neighbourhood kernel at iteration $t$, and $\mathcal L_t(W)$ and $\widehat{\mathcal L}_t(W)$ denote the population and minibatch objectives, respectively. Finally, $p$ is number of factors, and $K:=|\Lambda_{\mathfrak F}|$ is the number of prototypes.

\subsection{Remarks on the factor-space structure}
\label{app:factor-space-details}

The realized factor space $\mathcal F_g=\mathcal Z/\!\approx_g$ carries the quotient topology induced by the canonical projection $\pi:\mathcal Z\to\mathcal F_g$. The scale functions are defined on $\mathcal Z$ and need not descend to the quotient: descent would require
\begin{equation}
    c_i(\z)=c_i(\z')
    \qquad\text{whenever}\qquad
    \z\approx_g\z'.
    \label{eq:scale-descent}
\end{equation}
This distinction matters at collapsed fibres, where the generator can identify an entire coordinate fibre while the remaining scale functions still vary
along that fibre.

The complete pullback metric~\cite{kreyszig1991differential} is
\begin{equation}
    G_{ij}(\z)
    =\left\langle
       \frac{\partial g}{\partial z_i}(\z),
       \frac{\partial g}{\partial z_j}(\z)
     \right\rangle,
    \label{eq:full-pullback-metric}
\end{equation}
whose diagonal entries satisfy $G_{ii}=c_i^2$. The present FactoMap construction uses these diagonal scales and analytic separable lattice distances; it does not represent nonzero off-diagonal terms.

Neither $\cvec_g$ nor the quotient topology uniquely determines a global lattice distance. The choice of $(\Lambda_{\mathfrak F},d_{\mathfrak F})$ is therefore a supplied finite realization of the factor-space structure and is part of the model design.

\subsection{Derivation of the hue-scale metric}
\label{app:hue-scale}

\aniso*
\begin{proof} 
Because the background is black, the generator separates into a spatial mask and a colour vector:
\begin{equation}
    \frac{\partial g}{\partial h} =m_s\,\bm\rho'(h),
    \qquad
    \frac{\partial g}{\partial s} =(\partial_s m_s)\,\bm\rho(h).
\end{equation}
Taking norms gives
\begin{equation}
    c_h=\|m_s\|_2\|\bm\rho'(h)\|_2=av\,s,
    \qquad
    c_s=\|\partial_s m_s\|_2\|\bm\rho(h)\|_2=bq\,s^\gamma.
    \label{eq:hue-scale-proof-scales}
\end{equation}
Moreover,
\begin{equation}
    \left\langle\frac{\partial g}{\partial h},
                       \frac{\partial g}{\partial s}\right\rangle
    =\langle m_s,\partial_s m_s\rangle
     \langle\bm\rho'(h),\bm\rho(h)\rangle=0,
    \label{eq:hue-scale-cross-term}
\end{equation}
because constant colour magnitude implies $\left\langle\bm{\rho}'(h),\bm{\rho}(h)\right\rangle=0$. Substitution into $\G=\J_g^\top\J_g$ proves the stated metric and scale ratio. A fixed diagonal coordinate re-weighting multiplies $c_h$ and $c_s$ by constants and therefore cannot remove their positional dependence. In particular, the hue-hue entry remains proportional to $s^2$, so the metric cannot equal $C^2\I$ on a non-degenerate scale interval.
\end{proof}

\textbf{Factor-wise reparameterization.}\quad For coordinate changes $h=\eta(u)$ and $s=\sigma(t)$, the transformed
scales are
\begin{equation}
    \widetilde c_u=av\,\sigma(t)|\eta'(u)|,
    \qquad
    \widetilde c_t=bq\,\sigma(t)^\gamma|\sigma'(t)|.
    \label{eq:transformed-scales}
\end{equation}
The first cannot be constant on a non-degenerate scale interval because its dependence on $t$ cannot be cancelled by $\eta'(u)$. Hence no factor-wise reparameterization makes the metric uniformly Euclidean. A scale reparameterization can equalize the two diagonal entries point-wise, but their common value remains position-dependent, yielding a conformal rather than a uniformly isometric metric.

\textbf{Dependence on the mask model.}\quad The exponent $\gamma$ is renderer-dependent. For a smooth self-similar mask $m_s(\bm r)=\phi(\bm r/s)$, a change of variables gives
\begin{equation}
    \|m_s\|_2\propto s,
    \qquad
    \|\partial_s m_s\|_2\propto1,
    \label{eq:self-similar-mask}
\end{equation}
so $\gamma=0$. For a smooth transition of fixed spatial width, the changing pixels occupy a band whose area grows with the object perimeter, giving
\begin{equation}
    \|\partial_s m_s\|_2^2\propto s,
    \qquad
    \|\partial_s m_s\|_2\propto\sqrt{s},
    \label{eq:fixed-width-mask}
\end{equation}
and hence $\gamma=\tfrac12$. A hard binary mask is not differentiable in $L^2$ with respect to $s$; finite-step lattice distances nevertheless remain well defined. The measured exponent $\gamma\approx0.56$ is close to the fixed-width prediction but should be interpreted as a property of the rendering pipeline. The theoretical conclusion does not depend on the precise exponent: in every smooth model above, the hue scale depends on the scale factor, so the induced geometry is not a uniform Euclidean product in the original factor coordinates.

\textbf{Collapse at zero scale.}\quad If the model is extended by $m_0\equiv0$, then $g(h,0)=0$ for every $h\in S^1$. Thus, $\approx_g$ identifies the entire hue fibre at $s=0$, while $c_h(h,s)=av\,s\to0$. The identification and vanishing scale record the same collapse globally and locally.

\subsection{Analytic lattice distances}
\label{app:lattice-distances}
FactoMap receives the finite lattice and its distance as inputs: $(\Lambda_{\mathfrak F},d_{\mathfrak F})$. No topology selection, identification estimation, or scale estimation is performed in the present work.

For a regular product lattice $\prod_{i=1}^{p}\{0,\ldots,S_i-1\}$, define
\begin{equation}
    \rho_i(a,b)=
    \begin{cases}
        |a-b|, & Z_i=I,\\[2pt]
        \min\{|a-b|,S_i-|a-b|\}, & Z_i=S^1.
    \end{cases}
\end{equation}
A weighted product distance is
\begin{equation}
    d_{\mathfrak F}(\bm j,\bm k)^2 =\sum_{i=1}^{p}\lambda_i^2\rho_i(j_i,k_i)^2,
    \label{eq:prod-lattice-dist}
\end{equation}
where $\lambda_i>0$ fixes the lattice unit of factor $i$. Open axes give lines and grids; wrapped axes give rings, cylinders, and tori.

For collapsed or non-uniform structures,~\eqref{eq:prod-lattice-dist} is replaced by the analytic distance used by the corresponding lattice. These distances must specify both the collapse implementation and the dependence of one factor's extent on another. Finally, because the neighbourhood depends on $d_{\mathfrak F}/\sigma_t$, a global rescaling of the lattice distance is equivalent to rescaling the neighbourhood schedule. Distance normalization and the $\sigma_t$ schedule must therefore be specified together.

\subsection{Optimization}\label{app:opt}
For each mini-batch $\mathcal B_t$, we compute
\begin{equation}
    b_n =\argmin_{j\in\Lambda_{\mathfrak F}}
      \|\x_n-\w_j\|_2^2,
    \qquad \x_n\in\mathcal B_t,
    \label{eq:minibatch-bmu}
\end{equation}
and hold $b_n$ fixed during the subsequent gradient step. Conditional on these assignments, the minibatch objective is
\begin{equation}
    \widehat{\mathcal L}_t(W) =\frac{1}{|\mathcal B_t|}
      \sum_{\x_n\in\mathcal B_t}
      \sum_{k\in\Lambda_{\mathfrak F}}
      H_{\sigma_t}(b_n,k)\|\x_n-\w_k\|_2^2,
    \label{eq:minibatch-loss}
\end{equation}
with prototype gradient
\begin{equation}
    \nabla_{\w_k}\widehat{\mathcal L}_t =\frac{2}{|\mathcal B_t|}
      \sum_{\x_n\in\mathcal B_t}
      H_{\sigma_t}(b_n,k)(\w_k-\x_n).
    \label{eq:prototype-gradient}
\end{equation}
We update the prototypes with Adam~\cite{kingma2014adam} and recompute the BMUs after every update; no gradient is propagated through the $\argmin$. The experimental details specify the Adam parameters, batch size, number of updates, prototype initialization, and the complete schedule for $\sigma_t$ annealing.

%
\section{Experimental Details}\label{app:exp}
\subsection{Datasets}
\textbf{FactoShapes.}\quad We introduce \emph{FactoShapes}, a synthetic image dataset of resolution $64\times64\times3$ RGB images (\Cref{fig:dataset}). The generative factors are $v=(\text{object\_hue},\,\text{scale},\,\text{pos}_x,\,\text{pos}_y)$, so that the factor space is $S^1\times I^3$: hue is periodic (HSV hue in $[0,1)$ at fixed saturation $\langle S\rangle$ and value $\langle V\rangle$ ), while scale and the two position factors are intervals, $s\in [ s_{\min}, s_{\max}]$  and $p_x,p_y\in[-p^\star,p^\star]$. The bound $p^\star$ is obtained by a numerical solver as the largest offset for which the projected object remains fully inside the frame at the largest scale $s_{\max}$, guaranteeing that no sample is clipped and that the four factors are independently variable over the whole product range. 

\textbf{Sampling.}\quad FactoShapes supports three modes. In \emph{grid} mode the dataset is the Cartesian product of $n_h\times n_s\times n_x\times n_y$ factor values, giving $N=\langle N\rangle$ images with integer factor classes, matching the Shapes3D~\cite{3dshapes18} convention. In \emph{random-binned} mode factor values are drawn uniformly from the same discrete grid, so that each image is a random lattice point and factor classes remain defined. In \emph{continuous} mode each factor is drawn i.i.d.\ with a choice of uniform or Gaussian distribution from its continuous range; no image is repeated, and the integer class array is set to $-1$ throughout. Datasets are written in 3dshapes format: \texttt{images} of shape $(N,64,64,3)$ \texttt{uint8}, \texttt{labels} of shape $(N,4)$ \texttt{float64}, \texttt{latents\_classes} of shape $(N,4)$ \texttt{int64}, together with attributes recording the factor names and ranges. 

\Cref{fig:factomap}B shows the distribution and samples of the dataset when hue is in $[0,1)$ with S=V=1, $s\in [0.75, 1.5] \text{ and } p_x, p_y \in [0.556, 0.556]$.

\textbf{Hue and scale as the two varying factors.}\quad We vary object\_hue (HSV hue in $[0,1)$ with $S = V = 1$) and scale ($s\in[0.75,1.5]$) while keeping the $\text{pos}_x$ and $\text{pos}_y$ fixed. We generate $100{,}000$ samples in the continuous mode with each varying factor drawn from i.i.d.\ uniformly for our experiments.~\Cref{fig:dataset} shows the sampling distribution with visuals of generated sample.
\begin{figure}
    \centering
    \includegraphics[width=1.00\linewidth]{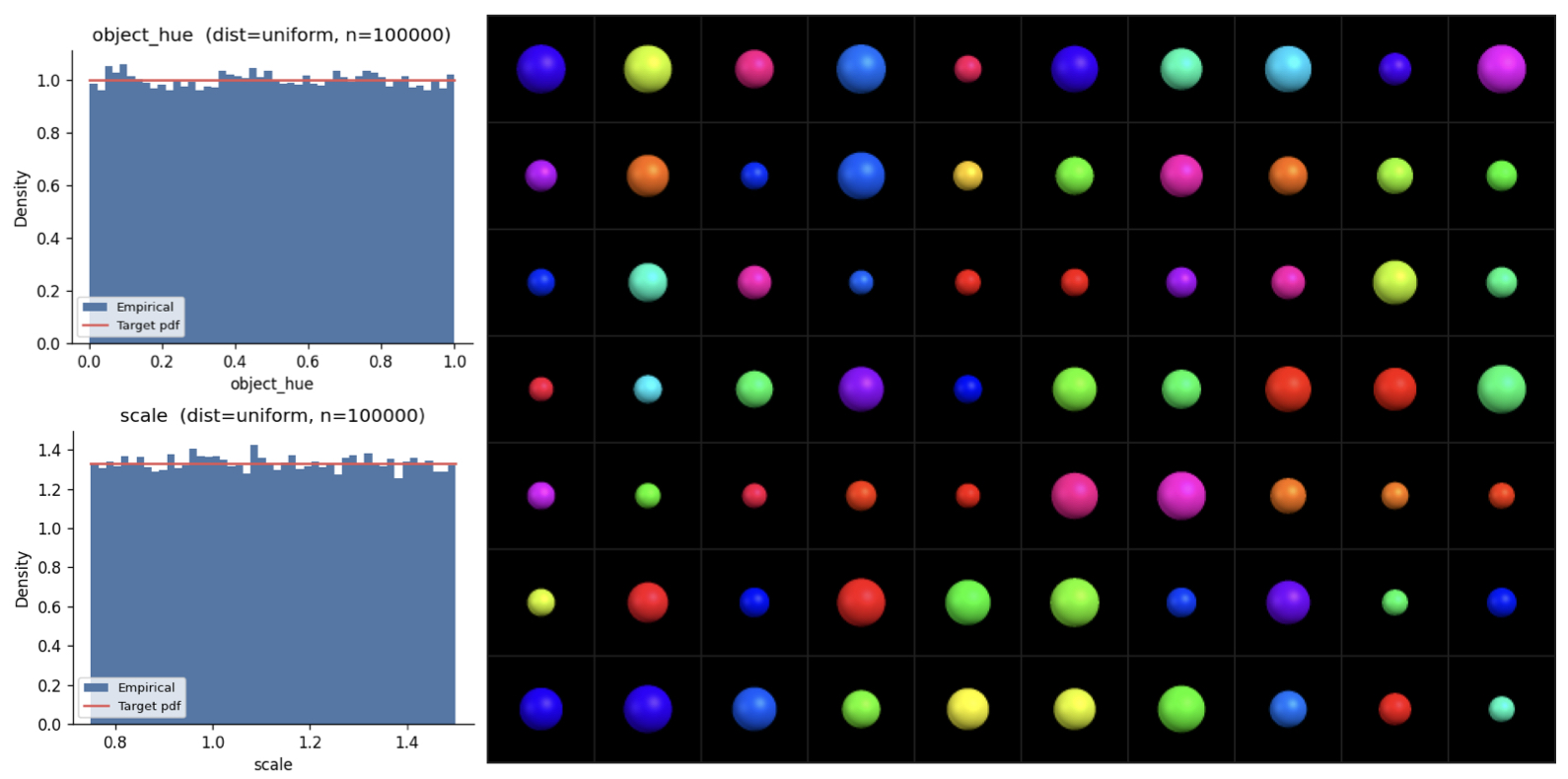}
    \caption{\textbf{FactoShapes Dataset.} Sampling distribution (uniform) and visuals of samples for object\_hue and scale as varying factors.}
    \label{fig:dataset}
\end{figure}

\textbf{HSV-induced metric cross-term.}\quad \Cref{prop:aniso} assumes a colour path with constant RGB norm about the black background, whereas the experiments use $\bm\rho_{\mathrm{HSV}}(h)=\operatorname{HSVtoRGB}(h,1,1)$ with normalized hue $h\in[0,1)$. This path has non-constant RGB norm and is differentiable only within each of its six linear sectors. In the first sector,
$h\in[0,\tfrac16]$, it is
\begin{equation}
    \bm\rho(h)=(1,6h,0)=(1,\tau,0),
    \qquad \tau:=6h\in[0,1].
    \label{eq:hsv-first-sector}
\end{equation}
The remaining sectors have the same form up to a channel permutation and orientation. Away from the sector boundaries,
\begin{equation}
    \|\bm\rho(h)\|_2=\sqrt{1+\tau^2},
    \qquad
    \|\bm\rho'(h)\|_2=6,
    \qquad
    |\langle\bm\rho(h),\bm\rho'(h)\rangle|=6\tau.
\end{equation}
Consequently, the normalized colour contribution to the cross-term is
\begin{equation}
    \frac{|\langle\bm\rho,\bm\rho'\rangle|}
         {\|\bm\rho\|_2\|\bm\rho'\|_2}
    =\frac{\tau}{\sqrt{1+\tau^2}}.
\end{equation}
For $g(h,s)=m_s\bm\rho(h)$, the metric cross-term factorizes as
\begin{equation}
    G_{hs}(h,s) =\langle m_s,\partial_s m_s\rangle
     \langle\bm\rho'(h),\bm\rho(h)\rangle,
\end{equation}
and is therefore generally nonzero. Under the leading-order mask model, however, the diagonal scales retain the scale dependence
\begin{equation}
    c_h(h,s)=6as\propto s,
    \qquad
    c_s(h,s)=b\sqrt{1+\tau^2}\,s^\gamma\propto s^\gamma
\end{equation}
almost everywhere. Thus, the supplied lattice matches the factor topology and the power-law dependence on scale, but not the HSV-induced cross-term or the hue-dependent prefactor of $c_s$. The results consequently demonstrate the benefit of this partial geometric match rather than general handling of non-diagonal metrics.

\subsection{Training and model parameters}
\textbf{Training.}\quad Training uses a single step counter,
\texttt{while step < num\_steps}. We take $\texttt{num\_steps}=60{,}000$ updates at batch size $B=128$,
optimized with Adam at initial learning rate $\eta_0=0.1$ . We use the inbuilt \texttt{ExponentialLR} learning rate scheduler  function with $\gamma= e^{\texttt{-1/num\_steps}}$ . The initial neighborhood radius is not set explicitly; the model initializes it to half its own topological diameter. The radius $\sigma_0$  exponentially decays to $\sigma_T=1.0$ in the first 65\% of the training steps and is held constant thereafter. 

\textbf{Prototype field.}\quad For the \emph{cone} FactoMap we specify the lattice size to be $40\times20$ with prototypes $W\in {\R}^{K\times12{,}288}$ where $K = 800$ here. Knowing the fact that the rows are wrapped we have an increased number on the rows. The lattice carries \emph{cone} metric: a ring at radius $r$ has circumference $2\pi k r$ with $k=0.243$, and column $j$ sits at radius $r_j=r_0+j$ with $r_0=18.4$, so radii run from $18.4$ at the inner rim to $37.4$ at the outer rim.

The cone is used rather than a plain grid because an open rectangular grid cannot represent periodic hue without introducing a seam; wrapping the hue axis removes this artificial boundary. A cylinder is neither suitable as the generator's induced metric is not a product metric: the hue and scale coefficients scale as $c_h\propto s^{1.05}$ and $c_s\propto s^{0.56}$, so the arc length of a full hue circle relative to radial spacing grows with scale. Forming a \emph{cone} helps to preserve the relation. 

\section{Additional Visualizations}
We learn grid and cone prototype lattices on FactoShapes (\Cref{fig:dataset}), where hue and scale are the only varying factors. On the grid, hue and scale don't run systematically along the lattice axes, so neither factor is recoverable from any axis direction. The failure is informative: the learned prototypes still wrap in hue, indicating that the map is trying to close a cycle the grid cannot support. Matching this cyclic structure requires a lattice with a circular direction, and the taper of a cone additionally accommodates the growth of the hue metric with scale. On the cone lattice, hue aligns with the circular direction and scale with the axial direction (\Cref{fig:app2}), and \Cref{tab:disentanglement} confirms this with InfoMEC. 
\begin{figure}[t]
    \centering
    \includegraphics[width=1\linewidth]{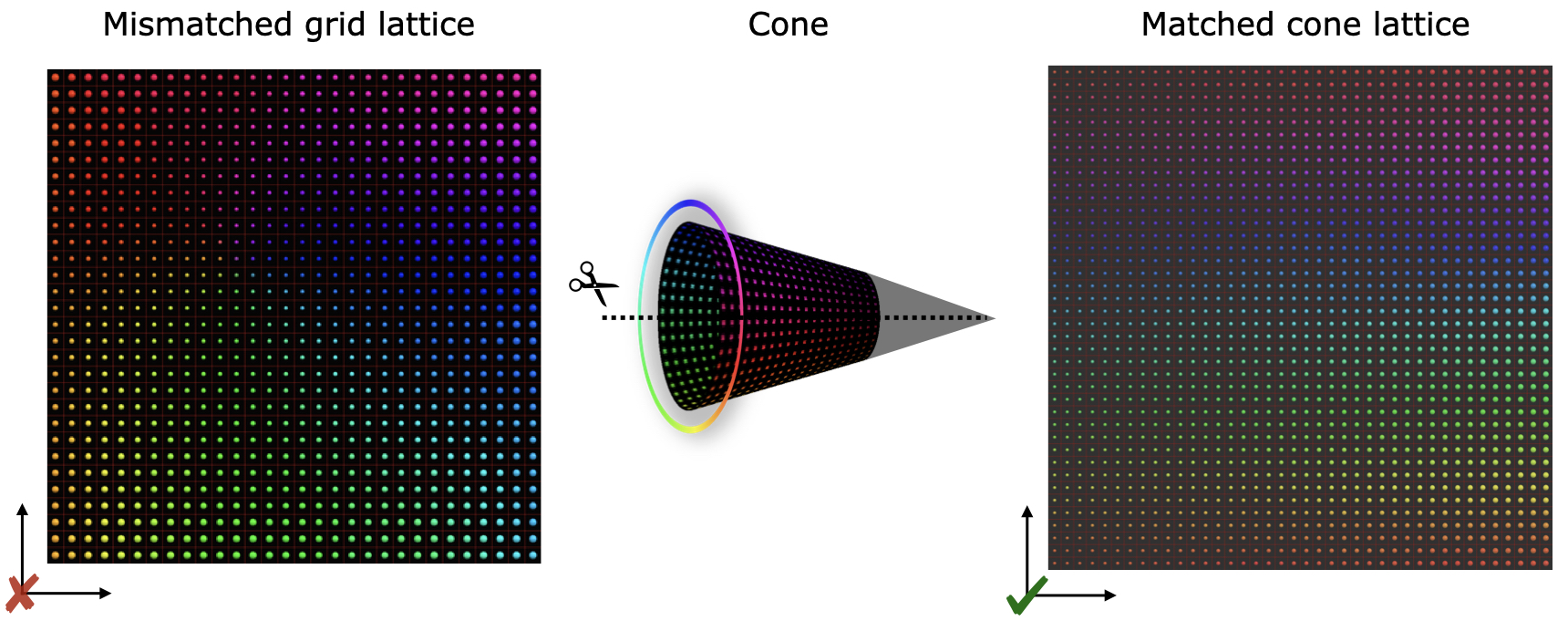}
    \caption{\textbf{Visualizations of learned prototypes in a grid lattice and cone lattice.} The cone is cut to unfold the lattice to visualization the alignment of prototypes.}
    \label{fig:app2}
\end{figure}


\end{document}